\title[Certifying Safety in Reinforcement Learning]{Certifying Safety in Reinforcement Learning under Adversarial Perturbation Attacks}
\author{%
 \Name{Junlin Wu} \Email{junlin.wu@wustl.edu}\\
 \Name{Hussein Sibai} \Email{sibai@wustl.edu}\\
 \Name{Yevgeniy Vorobeychik} \Email{yvorobeychik@wustl.edu}\\
 \addr Computer Science \& Engineering\\
    Washington University in Saint Louis%
}
\begin{document}

\maketitle

\begin{abstract}
Function approximation has enabled remarkable advances in applying reinforcement learning (RL) techniques in environments with high-dimensional inputs, such as images, in an end-to-end fashion, mapping such inputs directly to low-level control.
Nevertheless, these have proved vulnerable to small adversarial input perturbations.
A number of approaches for improving or certifying robustness of end-to-end RL to adversarial perturbations have emerged as a result, focusing on cumulative reward.
However, what is often at stake in adversarial scenarios is the violation of fundamental properties, such as safety, rather than the overall reward that combines safety with efficiency.
Moreover, properties such as safety can only be defined with respect to true state, rather than the high-dimensional raw inputs to end-to-end policies.
To disentangle nominal efficiency and adversarial safety, we situate RL in deterministic partially-observable Markov decision processes (POMDPs) with the goal of maximizing cumulative reward subject to safety constraints.
We then propose a \emph{partially-supervised reinforcement learning (PSRL)} framework that takes advantage of an additional assumption that the true state of the POMDP is known at \emph{training} time.
We present the first approach for certifying safety of PSRL policies under adversarial input perturbations, and two adversarial training approaches that make direct use of PSRL.
Our experiments demonstrate both the efficacy of the proposed approach for certifying safety in adversarial environments, and the value of the PSRL framework coupled with adversarial training in improving certified safety while preserving high nominal reward and high-quality predictions of true state.
\end{abstract}

\begin{keywords}%
Adversarial Reinforcement Learning, Safe Reinforcement Learning, Certified Robustness

\end{keywords}

\section{Introduction}

Recent years have seen remarkable advances in reinforcement learning (RL) techniques using deep neural networks to represent either value and $Q$ functions, or policies, with applications ranging from autonomous driving to healthcare~\citep{panesar2019machine,Kiran21}.
As we aspire to transition advances to practice in high-stakes applications, however, safety becomes a critical concern.
Indeed, a series of demonstrations have shown that small perturbations to observations that constitute inputs into the deep neural networks can successfully subvert learned policies to nearly arbitrary adversarial ends~\citep{pattanaik2017robust,sun2020stealthy}.
In response, a number of approaches have been proposed to augment traditional reinforcement learning algorithms in order to improve empirical robustness~\citep{lutjens2020certified,oikarinen2021robust,wu2022robust,zhang2020robust}.
In parallel, several recent efforts proposed techniques for \emph{certifying} robustness of learned policies to adversarial perturbations~\citep{kumar2022policy,wu2022crop}.

What seems notably missing in prior work on robustness of RL to adversarial perturbations is any explicit consideration of safety, except as abstractly embedded in the reward signal along with other important, but conceptually orthogonal, aspects such as efficiency.
Conflating these two issues is consequential: in practice, adversarial events are quite rare, and when they occur, we are most concerned about safety, rather than efficiency.
On the other hand, safety verification is a central concern in formal analysis of dynamical systems and control~\citep{C2E2paper,FanQMV:CAV2017,CAS13,alex02hscc,HuangFMMK14}.
An important feature in the latter is that state is relatively low-dimensional and interpretable, so that safety properties can be specified with respect to it.
In contrast, robust reinforcement learning and robustness certification methods have been designed for end-to-end approaches that map high-dimensional uninterpretable perceptual inputs directly into low-level controls.
However, as safety properties cannot be meaningfully specified over such inputs, it is not explicitly considered.

To bridge the gap between robustness certification approaches in adversarial RL and verification methods in dynamical systems and control, we begin by explicitly modeling the environment as a 
deterministic partially-observable Markov decision process (POMDP).
States in this POMDP are low-dimensional and interpretable (and, therefore, amenable to safety specification); observations are high-dimensional and uniterpretable.
Thus, conventional control would verify properties of the true state of this POMDP (usually, with additional assumptions about its structure), whereas conventional end-to-end RL would map observations directly to actions, effectively treating observations as state.
Like conventional verification, we define safety with respect to the true state, and set the goal of the agent as maximizing cumulative reward over a finite horizon subject to safety constraints.
In addition, although the true state of a POMDP is not observable at decision time, we assume that it can be observed during \emph{training}.
For any safety-critical application, this assumption is natural, as one must carefully log and annotate the training phase just for the purposes of testing and debugging.

Our first contribution is a partially-supervised reinforcement learning framework (PSRL) that couples supervised learning used to predict state from observations with reinforcement learning that learns a control policy over a low-dimensional \emph{predicted} system state. 
Supervision can in principle be done independently from RL (e.g., one can use a pretrained prediction model, e.g., mapping an image to locations of obstacles on the road), but it would typically be best to at least fine-tune such models during RL to better match performance to the policy-induced input distribution.
The PSRL framework can be viewed as a simplified model of the far more complex compositions of perception, learning and control,  that are commonly adopted in safety-critical environments, such as autonomous driving.

Our second contribution is an approach for adversarial safety certification of PSRL policies over a finite time horizon.
Specifically, we develop an algorithm that computes a lower bound on the greatest magnitude of adversarial perturbations to \emph{observations} such that safety constraints (specified with respect to true \emph{states}) are not violated.

Finally, we propose adversarial training approaches that make use of the PSRL framework.
An important tension in this context is between end-to-end adversarial training updates, which is most directly tuned to decisions, and adversarial training updates to the supervised and RL components, which most leverages the available supervised signal.
We investigate several variations of adversarial training that balance these considerations in alternative ways, and find that a hybrid approach that combines end-to-end and supervised updates yields a good balance of high nominal reward, adversarial certified safety, and accurate state predictions.

\smallskip
\noindent\textbf{Related Work }
Our work is situated in the broad area of adversarial machine learning, which is concerned with adversarial tampering with data used for either learning or predictions~\citep{huang2011adversarial,VorobeychikBook18}.
The most significant progress in this field has been in the context of supervised learning, with a broad array of approaches proposed to improve robustness of learning to adversarial perturbations to either training data~\citep{kearns1993learning,long2011learning,liu2017robust} or to inputs at prediction time~\citep{Cai18,Madry18}.
While many of the approaches for improving ML robustness focus on empirical robustness measures, a  number aspire to formally verify robustness to adversarial perturbations~\citep{chiang2020detection,Cohen2019CertifiedAR,wang2021betacrown,xu2021fast}, and to train models that exhibit improved \emph{verified} or \emph{certified}, rather than merely \emph{empirical} robustness~\citep{salman2019provably,Wong18}.
Our contribution is most connected to this latter line of work.

Modern reinforcement learning approaches have also been shown to suffer from adversarial perturbations~\citep{pattanaik2017robust,sun2020stealthy}.
A number of approaches have been proposed to improve RL robustness~\citep{lutjens2020certified,oikarinen2021robust,wu2022robust,zhang2020robust}, although the efficacy of these varies a great deal by domain.
Several recent efforts address the robustness certification problem in reinforcement learning by adapting the randomized smoothing technique from supervised to reinforcement learning~\citep{kumar2022policy,wu2022crop}.
However, these are limited to certifying either policy invariance or reward lower bounds, rather than 
safety.

There is a rich literature in control theory and formal methods tackling the settings of adversarial perturbations of states and observations of dynamical systems. Research problems such as stability, reachability, robustness, formal safety verification, and correct-by-construction control synthesis in such settings have been extensively studied in the past few decades 
\citep{MitraCPSBook2021,tabuada_2009,C2E2paper,FanQMV:CAV2017,CAS13,Spaceex,bak2017hylaa,Althoff2015a,TIRA_Meyer_2019}. 
More recently, a promising line of research resulted in methods for verifying safety properties input-output inclusion properties of various neural networks (NN) and for dynamical systems with machine-learned, usually NN, controllers \citep{ivanov2019verisig,Sherlock-poly-2019,NNV,reachnn}. 
However, most prior work has assumed relatively low-dimensional systems and inputs.
There is an emerging line of research focused on assuring the robustness and safety of control systems 
with high-dimensional sensor inputs
\citep{Dean2020GuaranteeingSO,pmlr-v120-dean20a,pmlr-v144-dean21a,Ulices-Shoukry-NNLander-VeriF-2022,Katz-Corso-Strong-Kochenderfer-2022}. So far, the results are restricted to scenarios where
the map from sensor inputs to semantic outputs of the perception module is smooth, %
the state remains near training data of the perception module, 
an accurate generative model mapping semantic states to realistic high-dimensional sensor inputs is available, or 
an accurate model of the environment is known.
\section{Model}
\label{S:model}

We build on the formal model of deterministic finite-horizon \emph{partially observable Markov decision processes (POMDPs)}. 
A POMDP is defined by a tuple $(S,A,O,r,\tau,f,d)$, where $S$ is the state space, $A$ is the set of actions, $O$ is the set of observations, $r(s,a)$ is the reward function, which is a function of the current state $s$ and action $a$, $\tau(s,a)$ is the (deterministic) transition function that returns the next state $s'=\tau(s,a)$,
$f(s)$ is the deterministic observation function, 
with $o=f(s)$ for observation $o$ and state $s$, and $d$ is the probability distribution over the initial states.
Moreover, we consider the setting where the agent must also abide by a \emph{safety constraint} that certain unsafe states should never be visited.
Let $U$ denote the set of all unsafe states $s$.

We assume that $d$, $P$, $f$, and $r$ are unknown, and the agent must therefore learn how to act in this POMDP environment.
In addition, we assume that the state space $S \subseteq \mathbb{R}^m$ and the observation space $O \subseteq \mathbb{R}^n$, where $n \gg m$.
This setup captures settings in which the true state $s$ is comprised of a collection of semantically meaningful variables, while
the observations $o=f(s)$ are based on high-dimensional perceptual inputs, such as images, LiDAR point clouds, etc.

At any point in time $t$, the agent knows the history of observations, $h_t = \{o_0,\ldots,o_t\}$. 
In general, a policy $\pi$ depends on the full history of observations.
If the state and observation spaces are finite, one can use \emph{belief state} as a sufficient statistic. However, this is difficult in our setting since 
observations are high-dimensional.
Consequently, a common approach in applying deep reinforcement learning to partially-observable environments with high-dimensional observations is to condition policies only on a finite sequence of preceding observations.
To simplify exposition, we assume that policies depend only on the latest observation $o$; extension of our approaches to dependence on finite histories is straightforward.
We denote a policy by $\pi(o)$, mapping an observation $o$ to an action $a$.
By treating $o$ effectively as a proxy for the true state $s$, we can now apply any standard reinforcement learning method directly; and, indeed, this has been done with considerable success~\citep{Bojarski16,peng2021end,zhang2019end}.

Define $H_T(\pi) = \{o_0,s_0,o_1,s_1,\ldots,o_T,s_T\}$ to be the sequence of observations and states induced by a policy $\pi$, and suppose that $S_T(\pi) \subset H_T$ be the set of states in $H_T(\pi)$.
The goal of the agent is to maximize the total expected sum of rewards over a finite time horizon $T$, subject to the constraint that no unsafe states are visited along the way.
Formally, the agent's goal is to solve
\begin{align}
\label{E:goal}
\max_\pi \ \mathbb{E}\left[\sum_{t=0}^T r_t|\pi\right] \quad \mathrm{s.t.}: \quad  U \cap S_T(\pi) = \emptyset,
\end{align}
where the expectation is with respect to any randomness in the agent's decisions and initial states.

Our primary focus is robustness to adversarial perturbations to observations $o$.
Specifically, let $\delta$ be the adversarial perturbation which results in the observed state $o' = o + \delta$.
As is common~\citep{VorobeychikBook18}, we assume that the magnitude of $\delta$ is constrained to be in an $\ell_p$ $\epsilon$-ball, i.e., $\|\delta\|_p \le \epsilon$, for an exogenously specified $\epsilon \ge 0$ and norm $p \in \mathbb{N} \cup \{\infty\}$.
The adversary's goal is to cause the violation of the safety constraint over the finite POMDP horizon.
To formalize, let $\Delta=\{\delta_0,\ldots,\delta_{T-1}\}$ be the sequence of adversarial perturbations, $H_T(\pi,\Delta)$ be the history of the perturbed sequence of observations and states, and $S_T(\pi,\Delta)$ be the associated subset of states.
The adversary's goal is to identify a perturbation sequence $\Delta$ such that $\|\delta_t\|_p \le \epsilon$ for all $t$, and $U \cap S_T(\pi,\Delta) \ne \emptyset$.

\section{Partially-Supervised Reinforcement Learning}

In modeling the underlying environment as a deterministic POMDP, we have introduced an explicit way to refer 
to (true/semantic) states $s$ over which safety constraints $U$ 
are defined.
One can still, in principle, make use of conventional deep RL techniques to learn policies $\pi$ that map observed inputs to low-level control $a \in A$.
However, there are a host of reasons why this is undesirable in safety-critical domains.
First, when things go wrong, it is difficult to use the policy $\pi(o)$ to yield actionable insights into why, as it does not provide sufficient semantic information.
Second, there are often large datasets from complementary domains that allow us to learn semantically meaningful mappings $g(o)$ from sensory inputs, such as images, to states, such as the location of the ego vehicle, as well as other objects and vehicles in the scene.
Consequently, it is common in such domains to eschew end-to-end RL in favor of approaches that compose perceptual learning and reasoning with learning how to act.

We propose a \emph{partially-supervised reinforcement learning (PSRL)} framework as a stylized way to capture the typically far more complex compositional approaches for learning to act in 
high-stakes domains.
The key idea in PSRL is to take advantage of many situations in which the true state $s$ \emph{is known at training time}, and only unknown at decision time.
For example, if we are to train an autonomous vehicle using simulations, the true state is actually available. It is only as we transition a working vehicle to a physical environment that it must navigate with only perceptual information.
In PSRL, we construct a policy $\pi(o)$ as a composition of perceptual prediction module $g(o)$ and a semantic policy $\pi_s(s)$ which maps the predicted state $s$ to an action.
Thus, the goal in PSRL is to learn both $g$ and $\pi$. At decision time, we implement $\pi(o) = \pi_s(g(o))$.
Significantly, we can apply supervised learning to learn $g(o;\theta_g)$, which maps high-dimensional observations $o$ to relatively low-dimensional states $s$, and can apply standard RL to learn $\pi_s(s;\theta_s)$ over the low-dimensional predicted states $s$.
Below, we make use of standard regression loss for $g$, with $l_2$ loss $l(g(o),s) = \|g(o)-s\|_2^2$, and use DDQN~\citep{VanHasselt2016} deep Q-learning approach to learn an action-value function $Q(s,a)$, with $\pi_s(s) = \arg\max_a Q(s,a)$.
However, any standard approach for these would work.
During 
each training iteration, we update both $g$ using the supervised loss and $Q$ using the standard DDQN approach.

In the sequel, we address two key problems in using PSRL to obtain high-quality policies that remain safe even under adversarial perturbations to high-dimensional inputs $o$:
\begin{enumerate}
\item Formally certify safety properties of learned policies at decision time with respect to adversarial perturbations. %
\item Learn policies with strong certified safety guarantees under adversarial perturbations.
\end{enumerate}

\section{Certifying Safety under Adversarial Perturbations}

We begin with the problem of certifying safety of PSRL policies, $\pi(o) = \pi_s(g(o))$.
The goal of robustness certification in supervised learning is to either ascertain whether a prediction is robust (i.e., invariant) under a given perturbation budget $\epsilon$, or to certify a budget $\epsilon$ so that no perturbation under this budget constraint can possibly cause the prediction to change.
Two extensions of this idea to reinforcement learning involve either certifying that a policy is invariant under adversarial perturbations, or certifying a lower bound on total reward.
In our context, however, our sole concern is to certify \emph{safety} under adversarial perturbations.
The reason is that we view adversarial encounters as exceedingly rare, and expect the nominal policy to work well in most cases when it is deployed (as otherwise, it would not even be deployed).
However, when these do occur, we are typically far less concerned with issues such as efficiency (which is what rewards most naturally capture) and far more concerned with issues of safety (that the vehicle, for example, doesn't crash).

We formally define the \emph{adversarial safety certification (ASC)} problem as follows: 
\begin{center}
\emph{Given a policy $\pi$, identify the highest $\epsilon^*(\pi)$ such that} $U \cap S_T(\pi,\Delta)=\emptyset$ for all $\Delta$ such that $\|\delta_t\|_p \le \epsilon^*(\pi)$ and for all $0 \le t \le T-1$.
\end{center}
In practice, it will typically be infeasible to actually find $\epsilon^*(\pi)$, and our goal will be to return a provable lower bound $\bar{\epsilon} \le \epsilon^*(\pi)$.
If even the nominal (that is, unperturbed) policy is unsafe, $\epsilon^*(\pi) \equiv 0$.
Clearly, the \emph{ASC} problem definition requires the knowledge of whether a state $s$ encountered is unsafe.
We assume that this information is easy to obtain when one actually enters an unsafe state (e.g., when the car crashes, it is not difficult to verify that it has crashed).\footnote{In the case where we only simulate potential future during decision-time certification, we can alternatively leverage supervised learning to predict whether we have entered an unsafe state.  There is no concern about adversarial perturbations here, since this is done only for the purposes of certification and does not involve actual interaction with the environment.}

We assume that we 
can simulate transition $\tau(s,a)$ for any state $s$ and action $a$.
This is a significantly weaker assumption than those commonly made in verification of dynamical systems literature, where either the dynamical model is assumed to be fully known, or one assumes specific structure (such as discrepancy functions)~\citep{FanQMV:CAV2017,hsieh2021verifying,Ulices-Shoukry-NNLander-VeriF-2022,sun2019formal}.
Next, we present our algorithm for adversarial safety certification, \textsc{TASC} (tree-based adversarial safety certification algorithm).

Consider a given observation $o$, policy $\pi_s(s)$, state prediction function $g(o)$, and a bound on adversarial perturbation $\epsilon$.
The key building block in the certification procedure is to return a set of actions $A_\epsilon(o)$ with the property that the adversary cannot cause the policy to select an action not in that set $A \setminus A_\epsilon(o)$.
In the PSRL framework, this entails two steps: first, we certify a set of states $S_\epsilon(o)$ such that $g(o) \in S_\epsilon(o)$, and second, we certify a set of actions $A_\epsilon(o)=A(S_\epsilon(o))$ as the only actions that any state $s \in S_\epsilon(o)$ can induce.
Our approach is to treat these two steps as separate robustness certification problems under $\ell_p$-bounded input perturbations.
Specifically, our certification of the policy $\pi_s$ takes the smallest $\ell_\infty$-norm $\gamma$ such $S_\epsilon(o) \subseteq B_\gamma(s)$, where $B_\gamma(s)$ is a $\gamma$-ball around the predicted state $s = g(o)$.
We can then use, e.g., the $\alpha,\beta$-CROWN approach \citep{xu2021fast,wang2021betacrown} to calculate the upper/lower bound of Q values, $\overline{Q}(s,a)$ and $\underline{Q}(s,a)$ for each action $a$ for $s=g(o)$, on which $\pi_s$ is based (recall that we assume the use of Q-learning).
We can then use the bounds on the Q values to determine the action set $A_\epsilon(o)$ as the set of all actions $a$ for which $\overline{Q}(s,a) \ge \max_{a'} \underline{Q}(s,a')$.

For the problem of certifying $g(o)$ above, we appeal to two distinct approaches depending on the norm $\ell_p$.
Specifically, if we consider adversarial perturbations to observations $o$ to be bounded in $\ell_\infty$ norm, we can apply, e.g., the $\alpha,\beta$-CROWN approach.
On the other hand, if perturbations are bounded in $\ell_2$ norm, we can make use of median smoothing~\citep{chiang2020detection} to obtain interval bounds for each coordinate of $s$, which is now the median of the smoothed prediction.
Either approach in fact directly yields an $\ell_\infty$-ball $B_\gamma(s)$ that we can then use to certify the policy $\pi_s$ as described above.

Armed with this setup for certifying an action set for a given observation and $\epsilon$, we can now describe our full ASC algorithm.
ASC proceeds through a pre-defined discrete set $\{0,\epsilon_1,\epsilon_2,\ldots,\epsilon_N\}$ (e.g., by discretizing $\epsilon$ between 0 and 1 if inputs are normalized to the [0,1] interval).
Let $T_v$ be the verification horizon (which may be different from the reward horizon $T$), and consider the nominal history $H_{T_v}(\pi)$ and associated set of states $S_{T_v}(\pi)$ generated by the policy $\pi$ we are verifying from a known initial state $s_0$.\footnote{The assumption that the initial state is known is without loss of generality, since we can always define a dummy initial state before the beginning of the scenario.}
The first step is to check whether $U \cap S_{T_v}(\pi) = \emptyset$; if not, we simply return $\epsilon^*(\pi)=0$.
Otherwise, we proceed through a series of iterations of certification.

In each iteration $i$, we know to have verified $\bar{\epsilon} = \epsilon_{i-1}$, and we attempt to verify $\epsilon_{i}$.
In addition, through iterations $0,\ldots,i-1$, we have expanded a tree defined by sequences of actions beginning from the initial state $s_0$ (root node of the tree).
This tree is typically a very sparse subtree of the full tree defined by all possible $T_v$-long sequences of actions starting with $s_0$.
Each node $r$ in this tree corresponds to a state $s_r$ (equivalently, a sequence of actions taken starting with $s_0$), as well as the associated observation $o_r$.
In addition, for each node $r$ we have previously verified a set of actions $A_r^{i-1}$.
Since we know the initial state, we begin with the node that follows, and proceed along the nominal path $H_T(\pi)$.
For each node $r$ thereby encountered, we certify the set of actions $A_r^{i}$ under $\epsilon_i$.
If $A_r^{i} = A_r^{i-1}$, we can update the certificate for node $r$ to $\epsilon_i$ (and do not need any additional exploration of the subtree rooted at $r$).
If $A_r^{i} \ne A_r^{i-1}$, let $A_{r,i}^i$ be the set of new actions added by $\epsilon_i$ as compared to $\epsilon_{i-1}$.
Each action $a \in A_{r,i}^i$ induces a nominal history $H_{T(r)}(\pi,r)$, where $T(r)$ is the remaining verification horizon starting at node $r$, with associated set of states $S_{T(r)}(\pi,r)$.
If $U \cap S_{T(r)}(\pi,r) \ne \emptyset$, we immediately return $\bar{\epsilon} = \epsilon_{i-1}$, since we can no longer verify safety for $\epsilon_i$.
Otherwise, we expand the subtree rooted at $r$ one node at a time in a depth-first fashion, certifying an action set for each node under $\epsilon_i$, and return as soon as any unsafe state is encountered.
If the entire subtree is expanded without observing an unsafe state, we verify $\epsilon_i$ for node $r$, and continue along the nominal trajectory.
If we successfully complete $T_v$ steps along the nominal trajectory in this fashion, we can now verify $\bar{\epsilon} = \epsilon_i$, and we continue with the next iteration.
The complete \textsc{TASC} algorithm is provided in the full version of the paper (see \url{https://vorobeychik.com/psrl_safety_cert.pdf}).

\textsc{TASC} has three important properties.
First, it is straightforward to make it ``real-time'', that is, to have it return a certified adversarial safety bound $\bar{\epsilon}$ after an arbitrary time limit.
Second, as long as the procedure for certifying a set of actions $A_\epsilon(o)$ is sound, \textsc{TASC} is also sound.
Third, if $A_\epsilon(o)$ is also complete, then \textsc{TASC} is complete.
Next, we formalize the latter two properties.

Let \textsc{Cert}($\pi,o,\epsilon$) return a set of action $A_\epsilon(o)$ that are certified robust for policy $\pi$ in observation $o$ under a given bound on adversary's budget $\epsilon$ for some given norm $\ell_p$.
\begin{definition}
\textsc{Cert}($\pi,o,\epsilon$) is \emph{sound} if there does not exist an action $a' \notin A_\epsilon(o)$ 
where $\exists \delta$ with $\|\delta\|_p \le \epsilon$ such that $\pi(o+\delta)=a'$.  It is \emph{complete} if 
$\forall a' \in A, \exists \delta : \pi(o+\delta) = a'$ and $\|\delta\|_p \le \epsilon \Rightarrow a' \in A_\epsilon(o)$.
An ASC algorithm is \emph{sound} if it returns $\bar{\epsilon} \le \epsilon^*(\pi)$.
It is $\gamma$-\emph{complete} if $\epsilon^*(\pi) - \bar{\epsilon} \le \gamma$. 
\end{definition}
\begin{proposition}
If \textsc{Cert}($\pi,o,\epsilon$) is sound, then \textsc{TASC} is sound.  If \textsc{Cert}($\pi,o,\epsilon$) is also complete, then \textsc{TASC} is $\gamma$-complete, where $\gamma = \max_{i\ge 1} (\epsilon_i - \epsilon_{i-1})$ for the finite discrete schedule of adversarial noise bounds $\{\epsilon_i\}$, where $\epsilon_N$ is the absolute upper bound.
\end{proposition}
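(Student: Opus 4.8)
The plan is to reduce both claims to a single structural property of the search tree that \textsc{TASC} maintains, and then finish with elementary case analysis on how the algorithm terminates together with arithmetic on the discrete schedule $\{\epsilon_i\}$. For a fixed budget $\epsilon$ let $\mathcal{R}_\epsilon$ be the set of states that appear in $S_{T_v}(\pi,\Delta)$ for \emph{some} admissible sequence $\Delta$ with $\|\delta_t\|_p\le\epsilon$ for all $t$ --- i.e. the states reachable from $s_0$ within the verification horizon under the adversary --- and let $\mathcal{T}_\epsilon$ be the set of states labelling the nodes that \textsc{TASC} has examined (explicitly, or via a reused certificate from an earlier iteration) at the instant it would declare $\epsilon$ verified. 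The core invariant I would establish, by induction on depth in the tree, is: (i) if \textsc{Cert} is sound then $\mathcal{R}_\epsilon\subseteq\mathcal{T}_\epsilon$; and (ii) if \textsc{Cert} is additionally complete then every state \textsc{TASC} ever touches while handling budget $\epsilon$ (even during an aborted iteration) lies in $\mathcal{R}_\epsilon$, with equality $\mathcal{R}_\epsilon=\mathcal{T}_\epsilon$ once the iteration completes. The inductive step is the same in both cases: at the root the only state is $s_0$; and if a node $r$ with observation $o_r$ has been examined, then by the definition of \textsc{Cert}, soundness gives $\{\pi(o_r+\delta):\|\delta\|_p\le\epsilon\}\subseteq A_\epsilon(o_r)$ and completeness gives the reverse inclusion, so \textsc{TASC} branches on a superset (respectively exactly the set) of the actions the adversary can force at $r$, and determinism of $\tau$ lifts this to the depth-$(k{+}1)$ states.

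Given (i), \textbf{soundness of \textsc{TASC}} follows by cases on termination. If the nominal trajectory already meets $U$, \textsc{TASC} returns $\bar\epsilon=0=\epsilon^*(\pi)\le\epsilon^*(\pi)$. If it returns $\bar\epsilon=\epsilon_i$ after completing iteration $i$, then every state of $\mathcal{T}_{\epsilon_i}$ was checked to lie outside $U$, hence by (i) $\mathcal{R}_{\epsilon_i}\cap U=\emptyset$, i.e. $U\cap S_{T_v}(\pi,\Delta)=\emptyset$ for every admissible $\Delta$ of budget $\epsilon_i$; therefore $\epsilon^*(\pi)\ge\epsilon_i=\bar\epsilon$ (taking $T_v\ge T$, or reading the ASC horizon as $T_v$). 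If instead \textsc{TASC} aborts while attempting $\epsilon_i$ and returns $\bar\epsilon=\epsilon_{i-1}$, the same argument applied to the previously completed iteration $i-1$ gives $\epsilon^*(\pi)\ge\epsilon_{i-1}=\bar\epsilon$. In every case $\bar\epsilon\le\epsilon^*(\pi)$.

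For \textbf{$\gamma$-completeness}, assume \textsc{Cert} is also complete, so (ii) holds. Suppose \textsc{TASC} returns $\bar\epsilon=\epsilon_i$ with $i<N$. Then it must have aborted while attempting $\epsilon_{i+1}$, meaning at some examined node it found a state in $U$ (either on a new action's nominal continuation or deeper in a depth-first expansion); by the $\subseteq$-part of (ii) that state lies in $\mathcal{R}_{\epsilon_{i+1}}$, so some admissible $\Delta$ of budget $\epsilon_{i+1}$ violates safety. Hence $\epsilon^*(\pi)<\epsilon_{i+1}$, and $\epsilon^*(\pi)-\bar\epsilon<\epsilon_{i+1}-\epsilon_i\le\max_{j\ge1}(\epsilon_j-\epsilon_{j-1})=\gamma$ (the sub-case $\bar\epsilon=\epsilon_0=0$ is covered identically, using $\epsilon^*(\pi)=0$ when the nominal run is unsafe). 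The only remaining case is $\bar\epsilon=\epsilon_N$: since $\epsilon_N$ is the absolute upper bound on admissible budgets, $\epsilon^*(\pi)\le\epsilon_N=\bar\epsilon$, so $\epsilon^*(\pi)-\bar\epsilon\le0\le\gamma$. Combining the cases gives $\epsilon^*(\pi)-\bar\epsilon\le\gamma$.

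The main obstacle is making the inductive step of invariant (i)/(ii) precise in the presence of \textsc{TASC}'s cross-iteration caching: the algorithm does not recompute the entire $\epsilon_i$-reachable tree each iteration but reuses the certificate of a node $r$ (and of its subtree) whenever the freshly certified action set $A_r^{i}$ equals $A_r^{i-1}$. To close the induction one needs (a) for soundness, that this reuse rule is conservative --- it never reuses a subtree whose set of reachable states could have grown at the larger budget $\epsilon_i$ --- and (b) for completeness, that it is also tight, so that $\mathcal{T}_{\epsilon_i}$ equals the union of the reused portions and the iteration-$i$ depth-first expansions, no more and no less. Both follow once \textsc{Cert} is sound and complete, since then $A_\epsilon(o)=\{\pi(o+\delta):\|\delta\|_p\le\epsilon\}$ is monotone nondecreasing in $\epsilon$ (nested perturbation balls) and the stored subtree certificates record the largest schedule value for which the whole subtree has been verified safe; spelling out that bookkeeping exactly is the delicate part of the argument, and it is where the detailed algorithm in the full version does the work. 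Everything else --- the case split on termination and the schedule arithmetic above --- is routine.
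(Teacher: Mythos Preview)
Your proposal is correct and follows essentially the same line as the paper's proof sketch: soundness of \textsc{Cert} guarantees the expanded tree contains every adversary-reachable state (your invariant (i), the paper's ``the tree contains all nodes that the adversary could reach''), and completeness guarantees no spurious states are explored (your invariant (ii), the paper's ``no actions were expanded unnecessarily''), after which the same case split on termination and schedule arithmetic finishes both claims. Your version is more carefully formalized---introducing $\mathcal{R}_\epsilon,\mathcal{T}_\epsilon$, making the depth induction explicit, and spelling out why the cross-iteration certificate reuse is conservative/tight under monotonicity of $A_\epsilon(o)$---but these are elaborations of exactly the points the paper's sketch asserts without detail.
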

\begin{proof}[Proof Sketch]
Suppose \textsc{Cert}($\pi,o,\epsilon$) is sound, and let \textsc{TASC} returns $\bar{\epsilon}=\epsilon_i$ for some $i$ in the discrete sequence of adversarial noise bounds.
Consider the tree expanded before the algorithm moves forward with $\epsilon_{i+1}$. Since \textsc{Cert}($\pi,o,\epsilon$) is sound, the tree has expanded all actions in $A_\epsilon(o)$ for every node $o$ through depth $T_v$, and there can be no more actions to expand.
Consequently, the tree contains all nodes that the adversary could reach with $\bar{\epsilon}$.
Since \textsc{TASC} did not identify any of these to be unsafe, soundness follows.

For completeness, consider the same setting as above, and note that either $i=N$, in which case it's complete since $\epsilon_N$ is the absolute upper bound, or $i < N$, in which case a subtree generated for $\epsilon_{i+1}$ was found to contain an unsafe state.
Since \textsc{Cert}($\pi,o,\epsilon$) is complete, it follows that no actions were expanded unnecessarily. Thus, there exists a path from the starting state to an unsafe state that an adversary can induce with perturbations up to $\epsilon_{i+1}$.
This means that $\epsilon^*(\pi) \in [\epsilon_i,\epsilon_{i+1})$, and the result follows.
\end{proof}

\section{Learning Robust Policies}
\label{S:at}

A natural complement to certification is the problem of bolstering robustness through learning.
As mentioned above, the goal of conventional learning is to maximize total reward subject to the safety constraints (Equation~\eqref{E:goal}).
This learning problem, however, is focused solely on nominal reward, and only avoids safety violations on the nominal (that is, prior to adversarial perturbations) sequence of states.
In certified robust learning, we add a third goal: to maximize the certification bound $\epsilon^*(\pi)$ of the learned policy $\pi$.
Thus, the combined problem is
\begin{align}
\label{E:robustgoal}
\max_{\pi : U \cap S_T(\pi, \Delta) = \emptyset} \ \alpha \mathbb{E}\left[\sum_{t=0}^T r_t|\pi\right] + (1-\alpha) \epsilon^*(\pi),
\end{align}
where $\Delta = \{\delta_t\}$ with $\|\delta_t\|_p \le \epsilon^*$.

Note that Problem~\eqref{E:robustgoal} is substantively different from the typical goal of robust reinforcement learning, which is to maximize worst-case sum of rewards, where certification would simply play the role of constructing a lower bound on the worst-case cumulative reward.
We maintain, rather, that it is crucial to separate the primary goal operative \emph{under normal circumstances}, which is to maximize the sum of nominal rewards, and robustness pertains primarily to critical considerations, such as violation of safety constraints, in the \emph{rare} adversarial encounters.

One of the most basic techniques for boosting robustness of machine learning techniques---including reinforcement learning---is adversarial training~\citep{Madry18,oikarinen2021robust}.
In DDQN, adversarial training augments the DDQN loss with an adversarial counterpart.
There are a number of ways to implement the adversarial loss, but the most basic is to replace the actor $Q$ function, $Q_{\mathrm{actor}}(o,a)$ (using observation $o$ as a proxy for state, as is common) with its adversarial counterpart $Q_{\mathrm{actor}}(o+\delta,a)$, where the adversarial perturbation $\delta$ is generated by solving $\min_\delta \sum_a \pi_a(o)Q_{\mathrm{actor}}(o+\delta,a)$ using projected gradient descent, where $\pi_a(o)$ is the policy induced by the actor $Q$ function by passing the $Q$ function through a softmax layer~\citep{wu2022robust}.
We refer to this baseline version of adversarial training as simply \textsc{AT}.

An alternative, proposed by \cite{oikarinen2021robust}, is to use provably bounds on the actor $Q$-function obtained using interval bound propagation (IBP)~\citep{gowal2018effectiveness}
to maximize the difference the lower bound on the $Q$ function value for the action prescribed by the nominal policy and the upper bound $Q$ value for every other action.
We refer to this approach as \textsc{Radial}.
Note that both \textsc{AT} and \textsc{Radial} effectively collapse the PSRL architecture into a single $Q$ function (and, consequently, policy) architecture.

In addition, we propose two novel approaches to adversarial training that take advantage of the PSRL compositional architecture.
The first, \textsc{PSRL-AT}, makes use of IBP in both the loss for the policy $\pi_s$ (or, rather, the associated $Q$ function), as well as the supervised loss for $g$.
For the former, the approach is analogous to \textsc{Radial}.
For the latter, the adversarial counterpart of the $\ell_2$ loss becomes $\|\underline{g}(o)-s\|_2^2+\|\overline{g}(o)-s\|_2^2$, where $\underline{g}$ and $\overline{g}$ are the upper and lower bounds on $g$, respectively, which can be computed by IBP in the case of $\ell_\infty$ loss.
In the case of $\ell_2$ loss, we take an approach similar to \cite{salman2019provably}, combining randomized smoothing with the approach for generating bounds on the actor $Q$ function described above.

Our second proposed approach is a hybrid of \textsc{Radial} and \textsc{PSRL-AT}.
Specifically, we first use the \textsc{Radial} loss function to update the full composite $Q$ function (backpropagating all the way through the supervised component $g$).
In addition, we use \emph{only the adversarial supervised loss} $g$ to perform a second update over the same minibatch.
We refer to this approach as $\textsc{PSRL-Hybrid}$.
\section{Experiments}

\noindent\textbf{Experiment Setup }
Our experiments use the \emph{Highway} autonomous driving environment~\citep{highway-env}, which we customized to our setting (see the full version of the paper for details).
We experiment on three scenarios within this environment: \emph{Highway}, \emph{Twoway}, and \emph{Exit}.
In our experiments, we assume that the position of the ego vehicle is known to slightly simplify the setup.
In \emph{Highway} scenario, all vehicles are heading in the same direction on a three-lane road, with vehicles other than the ego vehicle traveling at constant speed.
The ego vehicle receives the reward of 0.1 for staying in the rightmost lane, 0.5 for high speed (between 20-30), and -1 for reaching an unsafe state, which we define as either collision or being within 1 unit of distance of other vehicles.
In the \emph{Twoway} scenario, the vehicles drive on a two-lane road in opposite directions.
There are 3 vehicles at the same lane as the ego vehicle, and 2 in the opposite lane.
The ego vehicle receives the reward of 0.5 for high speed, with the same penalty for unsafe states and collision as above. 
The \emph{Exit} scenario is similar to the \emph{Highway} scenario, except there is an exit lane appearing on the right, with a reward of 1 for a successful exit.
We assume we know the distance to the exit.

Our safety certification uses $T_v \in \{5,10,15\}$.
PSRL in our context involves two neural networks: $g(o)$, which maps the image input into the positions of other vehicles in the observed scene, and the $Q$ network, which takes the predicted state $s$ (i.e., predicted positions of two vehicles closest to ego vehicle in the scene) as input. However, we always use the true state to check for safety violation and collision.
First, we use standard DDQN~\citep{VanHasselt2016} to train the $Q$ network.
We train for 2000 episodes in the \emph{Highway} scenario, for 3000 episodes for the \emph{Twoway} scenario, and 50,000 steps for the \emph{Exit} scenario.
Next, we use the policy learned by DDQN to sequentially train $g$, using a $\gamma$-greedy policy (i.e., following the learned policy $\pi_s(s)$ with probability $1-\gamma$, and taking a random action otherwise), with parameter $\gamma$ decreasing over time as in DDQN.
We train $g$ for 500 episodes in the \emph{Highway} scenario, for 1500 episodes in the \emph{Twoway} scenario, and for 1000 episodes in the \emph{Exit} scenario.

For all our variants of adversarial training, we  train for 40,000 steps. The $\epsilon$ for adversarial training is increased linearly over the first 25,000 steps and then remains at the target level thereafter. 
We consider three values for a target $\epsilon$ for adversarial training: $\{1,2,3\}/255$.
Here we present the results for $\ell_\infty$ perturbations; results for $\ell_2$, and 
full details of the experiment setup, are provided in the full version of the paper.

\smallskip
\noindent\textbf{Efficacy of TASC} We begin by evaluating the efficacy of the proposed \textsc{TASC} adversarial safety certification algorithm. We set the maximum total number of nodes allowed to be explored to $M=500$; however, this was saturated in only a few cases in our experiments, and does not affect the medians reported below. 
We find that even for a relatively short horizon, \textsc{TASC} only needs to explore a small fraction of the search tree, in all cases below 5\%, and in many cases under 1\% of the nodes.
This becomes even more pronounced as the horizon increases.
For $T_v = 10$, \textsc{TASC} explores fewer than 0.0001 fraction of all nodes in all cases in all scenarios, while when $T_v = 15$, it explores fewer than 0.00001 fraction of nodes. 
Complete details are provided in the full version of the paper.

\smallskip
\noindent\textbf{Efficacy of Adversarial Training}
Next, we evaluate the efficacy of the adversarial training approaches described in Section~\ref{S:at}.
We compare \emph{PSRL} with no adversarial training, which we refer to as \emph{Vanilla}, \emph{AT} (standard adversarial training), \emph{RADIAL}~\citep{oikarinen2021robust} adapted to our setting by backpropagating adversarial error through the combined $\pi(o) = \pi_s(g(o))$ policy, \emph{PSRL-AT}, and \emph{PSRL-Hybrid}.
We consider three measures of efficacy: nominal reward, certified safety, 
and three measures of mean squared error (MSE) of state prediction: prediction based on actual observation $o$ (\emph{avg err}), and upper (\emph{avg err (ub)}) and lower (\emph{avg err (lb)}) bounds of prediction error after adversarial perturbation with $\epsilon = 1/255$, computed by $\beta$-CROWN~\citep{wang2021betacrown}.

Figure~\ref{F:reward-cert} presents the results for nominal reward (left column) and certified safety (right column).
\emph{PSRL} has a reliably high nominal reward, but neither \emph{PSRL} nor \emph{AT} yield meaningful safety certification, as we would expect, with AT often also suffering in terms of nominal reward.
\emph{PSRL-Hybrid} consistently performs well on both nominal reward and safety certification, while both \emph{RADIAL} and \emph{PSRL-AT} vary in this regard, scoring the best in some cases, and relatively poorly in others.
Thus, the hybrid approach indeed appears to yield the best balance of nominal reward and certified safety across the three scenarios.

\begin{figure}[h!]
\centering

\begin{tabular}{ccc}
\includegraphics[width=0.55\linewidth]{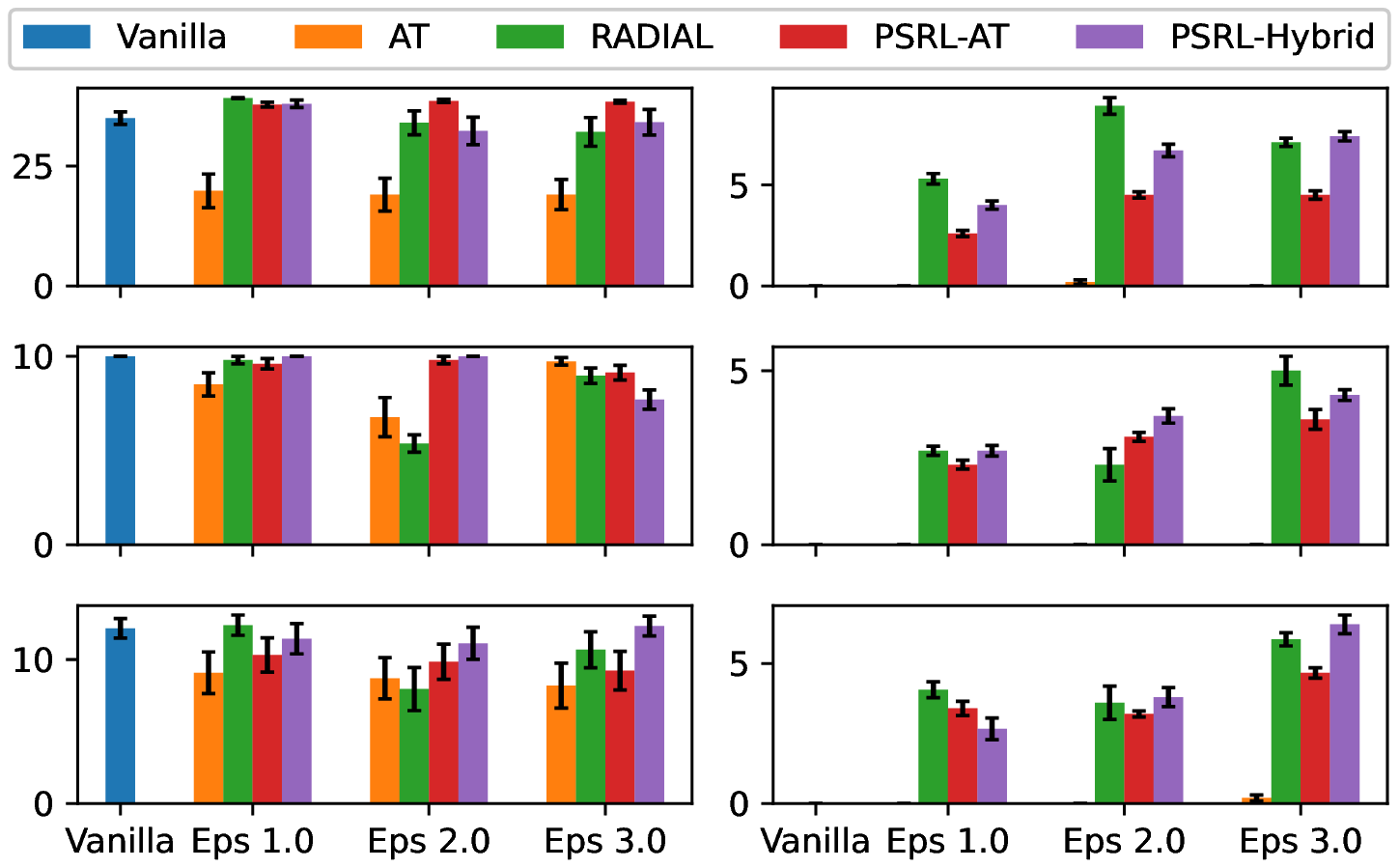}
\includegraphics[width=0.3\linewidth]{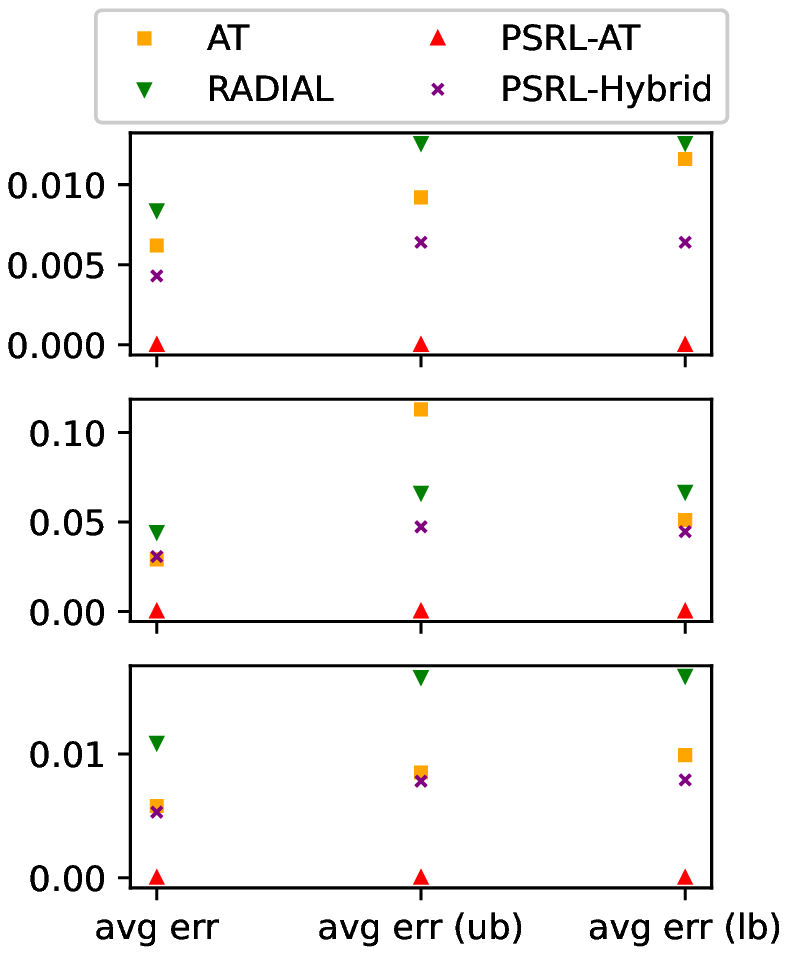}
\end{tabular}
\caption{Average reward (left), certified safety (middle), and MSE of state prediction error (right) for Highway (top), Twoway (middle) and Exit (bottom).
}
\label{F:reward-cert}
\end{figure}

When it comes to MSE, both \emph{PSRL-AT} and \emph{PSRL-Hybrid} typically outperform other adversarial training alternatives, and \emph{PSRL-AT} in particular is far better than all alternatives.
\emph{PSRL}-style adversarial training methods thus yield the additional advantage of consistency in performance and interpretability (in correctly predicting state based on observations) over end-to-end approaches.

\vspace{-10pt}
\section{Conclusion}

We presented the first framework for adversarial safety certification in reinforcement learning in the context of a POMDP system model.
We present a highly effective adversarial safety certification algorithm, TASC, as well as several adversarial training methods, several of which rely on a novel composition of supervised and reinforcement learning, PSRL, which leverages observations of true state at training time.
We show that making use of PSRL in adversarial training can yield strong nominal performance, high adversarially certified safety, and also obtain high-quality predictions of state, enabling a high level of interpretability.

\bibliography{aaai23,hussein}

\end{document}